\documentclass[lettersize,journal]{IEEEtran}

\usepackage[T1]{fontenc}
\usepackage[utf8]{inputenc}

\usepackage{amsmath,amsfonts}
\usepackage{amssymb} % math symbols
\usepackage{amsthm}  % theorem/proof environments (IEEEtran-compatible)
\usepackage{algorithm}
\usepackage{algpseudocode}
\usepackage{array}
\usepackage[caption=false,font=normalsize,labelfont=sf,textfont=sf]{subfig}
\usepackage{textcomp}
\usepackage{stfloats}
\usepackage{url}
\usepackage{verbatim}
\usepackage{graphicx}
\def\BibTeX{{\rm B\kern-.05em{\sc i\kern-.025em b}\kern-.08em
    T\kern-.1667em\lower.7ex\hbox{E}\kern-.125emX}}
\usepackage{balance}

\newtheorem{theorem}{Theorem}

\newtheorem{lemma}{Lemma}

\newtheorem{condition}{Condition}

\theoremstyle{remark}
\newtheorem{remark}{Remark}

\begin{document}
\title{Early-Terminable Energy-Safe Iterative Coupling for Parallel Simulation of Partitioned Port-Hamiltonian Systems}
\author{Qi Wei, Jianfeng Tao, Hongyu Nie, Wangtao Tan
\thanks{This work was supported by the National
Natural Science Foundation of China (Grant No. 52075320). 

The authors are with the School of Mechanical Engineering, Shanghai Jiao Tong University, Shanghai 200240, China. (e-mail: jftao@sjtu.edu.cn)
}}
 
\markboth{}%
{Early-Terminable Energy-Safe Iterative Coupling for Parallel Simulation of Partitioned Port-Hamiltonian Systems}

\maketitle

\begin{abstract}
Parallel simulation of robotic systems requires partitioning the dynamics into coupled subsystems. Finite-iteration coupling across the partition boundary can inject spurious energy, even when each subsystem is passive. We propose an early-terminable, energy-safe coupling interface for port-Hamiltonian subsystems based on Douglas--Rachford splitting in wave (scattering) coordinates. The wave-domain formulation reduces passivity to norm inequalities and coupling to orthogonality. Within this setting, the deep correspondence between monotone operator theory and discrete passivity can be exploited to construct a Douglas--Rachford inner iteration whose Fejér monotonicity provides algorithmic dissipation. 
Under passivity of the subsystem integrators and an impedance-tuning condition, the proposed method guarantees discrete passivity of the augmented storage for any finite inner-iteration budget and converges to the monolithic discretization as the budget increases. Experiments on a linear--Duffing coupled-oscillator benchmark support the finite-iteration energy inequality at numerical roundoff ($10^{-14}$ in double precision), with state-error metrics decreasing over the tested inner-iteration budgets.

\end{abstract}

\begin{IEEEkeywords}
Simulation and Animation, Dynamics, Formal Methods in Robotics and Automation.
\end{IEEEkeywords}

\section{Introduction}
\IEEEPARstart{P}{artitioned} time stepping is a practical necessity when simulating stiff, multi-physics robotic systems under real-time constraints \cite{negrut2014parallel}. A single coupled solve across all degrees of freedom can exceed the available time budget, making monolithic time stepping impractical for online applications.
Port-based model partitioning splits a system at its physical interfaces into coupled subsystems and advances each in parallel, offering a path to substantial wall-clock reduction. 
The subsystems remain coupled through the interface, however, and discrete exchange across that interface breaks the continuous-time power balance \cite{benedikt2013nepce,sadjina2017energy}, yielding spurious energy injection (effective negative damping) even when every subsystem is individually passive \cite{sadjina2024energy,gonzalez2019energy,rodriguez2022energy}. 
In time-critical simulation the inner iteration cannot run to convergence. It must terminate after a prescribed budget, and the interface signal delivered at termination must already be energetically admissible.

Existing partitioned coupling strategies fall into three families. The first, explicit coupling (zero-order-hold or extrapolation), avoids iteration but can destabilize under stiff coupling \cite{schweizer2015explicit,gomes2018co,blochwitz2011functional,chen2021explicit}. 
The second, macro-step iterations (Gauss--Seidel, Jacobi, relaxation), improve accuracy at the cost of repeated evaluations, yet their stability guarantees typically concern the converged interface solution \cite{peiret2018multibody,raoofian2024nonsmooth,dai2023model,schweizer2016implicit}. 
The third, Transmission Line Modeling (TLM), preserves passivity through physically motivated time delays \cite{krus2011robust,braun2022numerically,braun2013multi}, at the price of high-frequency distortion and parameter tuning \cite{braun2024transmission,barbierato2024locally,burton1993analysis}. 
What remains missing across these strategies is a guarantee that the coupled macro-step remains energy-safe when the inner iteration is cut short.

A second group of methods supplies the theoretical building blocks for such an interface. Wave (scattering) variables convert port passivity to a norm condition and lossless coupling to an orthogonal constraint \cite{cervera2007interconnection}, yet in co-simulation their use has mostly been associated with TLM-style explicit delays. Port-Hamiltonian formulations provide structure-preserving integrators that respect the continuous energy balance in monolithic simulation \cite{celledoni2017energy,argus2021theory,ehrhardt2025structure,parks2017variational}; retaining these properties under partitioned coupling has proven difficult \cite{bartel2023operator,bartel2025splitting}. These two tools have been developed in separate communities. A structural property of the wave domain makes their unification possible. Passivity and lossless coupling both reduce to norm conditions in wave coordinates, making the interface problem amenable to monotone operator methods. This letter develops the resulting synthesis.

In this letter, we propose an early-terminable energy-safe coupling interface based on wave-domain DR splitting. At each macro-step, a Douglas--Rachford (DR) inner iteration reconciles a linear lossless wave coupling constraint with the discrete-time port behavior of the subsystem integrators. The inner-iteration budget $K_n$ provides a tunable accuracy--effort trade-off, with an augmented-storage energy guarantee under any finite $K_n$. The main contributions are as follows. 

1) \textbf{An early-terminable energy-safe interface for parallel coupling:}
We introduce a parallelizable iterative coupling interface in wave (scattering) variables that enforces a linear lossless interconnection via DR inner iterations. The method accommodates nonlinear subsystem dynamics under a linear lossless interconnection (Algorithm~\ref{alg:scattering_iterative_coupling}). An open-source C++ implementation is available at \url{https://github.com/V7CN/wave-domain-iterative-coupling}.

2) \textbf{Finite-iteration passivity guarantee:}
Under the standing conditions of Section~IV, we establish a discrete passivity guarantee for the macro-step via an augmented storage function, for any finite inner-iteration budget (Theorem~\ref{thm:augmented_storage}).

3) \textbf{Convergence to the monolithic discretization:}
As the inner-iteration budget increases, the partitioned update provably converges to the monolithic discrete-time update induced by the same integrator (Theorem~\ref{thm:kn_to_infty_monolithic}).

The remainder of the letter is organized as follows. Section~II models the subsystems and coupling in the wave domain and formulates the interface as a fixed-point problem. Section~III presents the iterative coupling algorithm. Section~IV establishes passivity and convergence properties. Section~V reports numerical experiments. Section~VI concludes with open questions and extensions.

\section{Coupling Problem in Wave Domain: From Power Ports to Fixed-Point Form}
\label{sec:problem_formulation}

\subsection{Port-Hamiltonian Subsystems}
\label{subsec:ph_passivity}

Consider a robotic system partitioned into two subsystems $A$ and $B$. Each subsystem $i \in \{A, B\}$ has state $x_i \in \mathbb{R}^{n_i}$ and Hamiltonian $H_i: \mathbb{R}^{n_i} \to \mathbb{R}$ (the stored energy function). The subsystem exchanges energy with its environment through a power port with port variables effort $e_i \in \mathbb{R}^m$ and flow $f_i \in \mathbb{R}^m$, whose inner product $e_i^\top f_i$ is the instantaneous power injected into subsystem $i$ through the port.

Each subsystem is described in standard port-Hamiltonian (pH) form. The dynamics of subsystem $i$ are
\begin{equation}
\begin{aligned}
\dot{x}_i &= \big(J_i(x_i) - R_i(x_i)\big) \nabla H_i(x_i) + G_i(x_i) e_i, \\[2pt]
f_i &= G_i(x_i)^\top \nabla H_i(x_i),
\end{aligned}
\label{eq:ph_subsystem}
\end{equation}
where $J_i(x_i)^\top = -J_i(x_i)$ encodes the internal interconnection structure (energy routing), $R_i(x_i)^\top = R_i(x_i) \succeq 0$ encodes dissipation, and $G_i(x_i)$ maps the port effort into the state space. From the skew-symmetry of $J_i$ and the positive semi-definiteness of $R_i$, we obtain the continuous-time passivity inequality
\begin{equation}
\dot{H}_i(x_i) \leq e_i^\top f_i,
\label{eq:continuous_passivity} 
\end{equation}
which states that the rate of change of stored energy in subsystem $i$ does not exceed the power injected through the port. Inequality~\eqref{eq:continuous_passivity} is the physical foundation of all subsequent analysis in this paper, and the discrete numerical computation must preserve this property.

We consider a lossless interconnection. For two directly coupled ports, the coupling constraints are $e_A = e_B$ and $f_A + f_B = 0$, which imply power conservation $e_A^\top f_A + e_B^\top f_B = 0$. Violating these constraints can inject spurious energy.

\begin{figure}[!t]
\centering
\includegraphics[width=\linewidth]{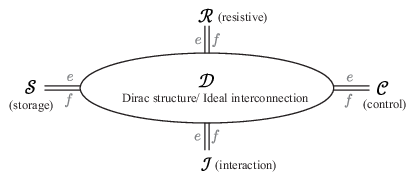}%
\caption{Port-Hamiltonian subsystem with power port $(e_i, f_i)$.}
\label{fig:ph_subsystem}
\end{figure}

\subsection{Scattering Transformation to the Wave Domain}
\label{subsec:scattering}

Scattering variables have two key properties. \textbf{(1) Port power becomes a difference of Euclidean norms}, so passivity reduces to a norm inequality. \textbf{(2) Lossless coupling becomes an orthogonal linear constraint}, so the interconnection is energy-neutral by construction.

Choose a positive parameter $\gamma > 0$ (the scattering impedance; see Section~\ref{subsec:standing_conditions}), and define the incident wave $a_i$ and outgoing wave $b_i$ for subsystem $i$:
\begin{equation}
a_i := \frac{e_i + \gamma f_i}{\sqrt{2\gamma}}, \qquad
b_i := \frac{e_i - \gamma f_i}{\sqrt{2\gamma}}.
\label{eq:scattering_transform}
\end{equation}

The inverse transformation is
\begin{equation}
e_i = \sqrt{\frac{\gamma}{2}}\,(a_i + b_i), \qquad
f_i = \frac{1}{\sqrt{2\gamma}}\,(a_i - b_i).
\label{eq:inverse_scattering}
\end{equation}

In the wave domain, port power has the norm expression:
\begin{equation}
e_i^\top f_i = \frac{1}{2}\big(\|a_i\|^2 - \|b_i\|^2\big).
\label{eq:power_norm}
\end{equation}

Hence, discrete passivity is equivalent in the wave domain to a norm condition:
\begin{equation}
\|b_i\| \leq \|a_i\|.
\label{eq:wave_passivity}
\end{equation}

In the terminology of monotone operators, the discrete passivity in~\eqref{eq:wave_passivity} is nonexpansiveness (NE). The subsystem does not amplify wave signals, corresponding to $\ell_2$ gain $\leq 1$.

For this linear lossless coupling,
\begin{equation}
e_A = e_B, \qquad f_A + f_B = 0.
\label{eq:lossless_coupling}
\end{equation}

This implies $e_A^\top f_A + e_B^\top f_B = 0$. Substituting~\eqref{eq:inverse_scattering} into~\eqref{eq:lossless_coupling}, the coupling constraints in the wave domain take the form
\begin{equation}
a_A = b_B, \qquad a_B = b_A.
\label{eq:wave_coupling}
\end{equation}

Define the stacked incident wave $a := \operatorname{col}(a_A, a_B) \in \mathbb{R}^{2m}$ and stacked outgoing wave $b := \operatorname{col}(b_A, b_B) \in \mathbb{R}^{2m}$; the coupling~\eqref{eq:wave_coupling} can then be written as
\begin{equation}
a = P b, \qquad P := \begin{bmatrix} 0 & I_m \\ I_m & 0 \end{bmatrix},
\label{eq:coupling_matrix}
\end{equation}
where $P^\top P = I_{2m}$ is an orthogonal matrix. For the more general case of $N$ subsystems with linear lossless interconnection, the coupling can be expressed as $a = P b$ with $P$ a general orthogonal matrix. Without loss of generality, this paper works throughout with the two-port swap matrix $P$.

From the orthogonality of the wave-domain coupling we have:
\begin{equation}
\|a\|^2 = \|P b\|^2 = b^\top P^\top P b = \|b\|^2.
\label{eq:coupling_norm}
\end{equation}

In the original $(e_i, f_i)$ coordinates the lossless constraint is an algebraic condition, whereas in the wave domain it becomes a norm equality. This transformation allows convex analysis and monotone operator tools to be applied to passivity.

\begin{remark}[Linear lossless coupling]
\label{rem:linear_coupling_scope}
Any network of ideal transformers, gyrators, and port permutations between collocated power ports (force--velocity, voltage--current, pressure--flow) admits a wave-domain representation $a = Pb$ with constant orthogonal $P$. State-modulated coupling, where $P$ depends on the subsystem states (configuration-dependent transmission ratios, variable contact topology), is not treated here. When the modulating state is frozen per macro-step and the resulting coupling matrix remains orthogonal, the same analysis applies locally to that frozen interconnection. Genuinely nonlinear lossless interconnections lie beyond the present scope.
\end{remark}

\subsection{Frozen Port Map and Macro-Step Consistency Condition}
\label{subsec:frozen_port_map}

This paper concerns discrete-time simulation. Let $\Delta t$ be the macro time step and $n = 0, 1, 2, \dots$ the macro-step index. All discrete wave variables absorb $\sqrt{\Delta t}$ relative to the continuous-time definitions~\eqref{eq:scattering_transform}, so that $a_i^n := \sqrt{\Delta t}\, a_i(t_n)$, $b_i^n := \sqrt{\Delta t}\, b_i(t_n)$, and $\frac{1}{2}(\|a_i^n\|^2 - \|b_i^{n+1}\|^2)$ is an energy. Every wave variable with a discrete superscript in the paper is understood to include this factor.

At macro-step $n$, each subsystem is driven by a discrete-passive integration operator $\Phi^{\Delta t}_i$. Given the current states $x_A^n, x_B^n$ of the two subsystems, the goal of each macro-step is to find the next states and port variables $(x_A^{n+1}, x_B^{n+1}, a_A, a_B, b_A, b_B)$ satisfying the three groups of equations:
\begin{equation}
\begin{cases}
(x_A^{n+1},b_A) = \Phi^{\Delta t}_A(x_A^n,a_A),\\[2pt]
(x_B^{n+1},b_B) = \Phi^{\Delta t}_B(x_B^n,a_B),\\[2pt]
a_A = b_B, \ a_B = b_A,
\end{cases}
\label{eq:monolithic_macro_step}
\end{equation}
which are, respectively, the internal dynamics of subsystems $A$ and $B$, and the strict coupling relation. We call this the \emph{monolithic macro-step}. Each macro-step requires solving an implicit algebraic system of dimension $(n_A+n_B+4m)$, which cannot be parallelized and is slow.

To decouple the interface from the state update, we freeze the current state and retain only the port-level input--output behavior. For subsystem $i$, fixing $x_i^n$ yields the \emph{frozen port map} of the current macro-step:
\begin{equation}
b_i = S_i^n(a_i), \quad i \in \{A, B\}.
\label{eq:frozen_port_map}
\end{equation}

$S_i^n(\cdot)$ is the outgoing wave produced by one evaluation of $\Phi_i^{\Delta t}(x_i^n, \cdot)$, with the resulting state discarded. The frozen port maps of the two subsystems stack into a block-diagonal operator:
\begin{equation}
S^n := \operatorname{diag}(S_A^n, S_B^n): \mathbb{R}^{2m} \to \mathbb{R}^{2m}.
\label{eq:stacked_frozen_map}
\end{equation}

The \emph{monolithic interface condition} of the current macro-step can then be written as
\begin{equation}
b = S^n(a), \quad a = P b.
\label{eq:monolithic_interface}
\end{equation}
Eliminating $a$ (or $b$) yields the equivalent fixed-point equation
\begin{equation}
b = S^n(P b) \quad \Longleftrightarrow \quad a = P S^n(a).
\label{eq:fixed_point}
\end{equation}

We call a solution of~\eqref{eq:monolithic_interface} the \emph{monolithic solution} at macro-step $n$, denoted $(a^{n,\star}, b^{n,\star})$. In fully coupled simulation, each macro-step solves a global implicit system to obtain it; our objective is to approximate this solution in an energy-safe manner. The fixed-point problem~\eqref{eq:fixed_point} is the central object.

\section{Proposed Method: Wave-Domain DRS Interface Algorithm}
\label{sec:iterative_coupling}

\begin{figure}[!t]
\centering
\includegraphics[width=\linewidth]{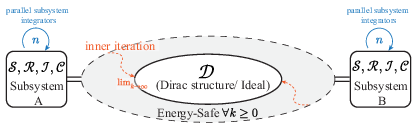}%
\caption{Parallel iterative coupling with macro-step $n$ and inner iteration $k$.}
\label{fig:iterative_coupling}
\end{figure}

\subsection{From Monolithic Fixed-Point to Interface Coordination}
\label{subsec:monolithic_to_coordination}

Section~\ref{subsec:frozen_port_map} formulated the monolithic interface condition as the fixed-point equation $b = S^n(Pb)$. We now turn this fixed-point problem into an interface coordination scheme. The equation expresses two competing consistency requirements on the outgoing wave $b$. The subsystems demand that $b$ be the reflection of their incident wave via the frozen port map $S^n$, while the coupling demands that $b$ satisfy the lossless interconnection $a = Pb$ through the orthogonal matrix $P$.

The most direct approach is to apply Picard iteration $b^{k+1} = S^n(P b^k)$ to this fixed-point problem. However, it lacks the structural property that the algorithm's energy does not increase, and it provides no way to judge whether early termination has injected spurious energy. We therefore adopt Douglas-Rachford splitting (DRS). DRS alternates between the \emph{subsystem response} side and the \emph{coupling correction} side: each queries its operator, and an auxiliary variable is updated toward consensus. Each step can be interpreted as a passive operation in the wave domain.

\subsection{Two Operators for Splitting: Subsystem Response and Coupling Correction}
\label{subsec:two_operators}

Without loss of generality, consider DRS for two maps $T_1, T_2$ \cite{bauschke2017convex}.
In our coupling problem, the space is $\mathbb{R}^{2m}$, the dimension of the stacked wave variables, and the two maps are:
\begin{equation}
T_1 = S^n = \begin{bmatrix} S_A^n & 0 \\ 0 & S_B^n \end{bmatrix}, \qquad
T_2 = J_L := (2I - P)^{-1}.
\label{eq:drs_mappings}
\end{equation}
 
\textbf{$T_1 = S^n$ is the frozen port map of the subsystems.} The block-diagonal structure of $S^n$ means that $S_A^n$ and $S_B^n$ can be evaluated \textbf{fully in parallel} with no communication. This is the root of the method's parallelism.

\textbf{$T_2 = J_L$ is the coupling resolvent operator.} Define the coupling mismatch operator $L := I - P$. Then $J_L = (I+L)^{-1} = (2I-P)^{-1}$ is a closed-form $2m \times 2m$ matrix operation that does not involve the internal states of the subsystems. For the two-port swap matrix $P = \begin{bmatrix} 0 & I \\ I & 0 \end{bmatrix}$, we have $J_L(x) = \frac{1}{3}(2x + Px)$. For general orthogonal $P$, $J_L = (2I-P)^{-1}$ is well defined.

With parallel execution of $S_A^n$ and $S_B^n$, the wall-clock time per inner iteration is the maximum of the two frozen port query times plus a small coupling update. The coupling update is a closed-form operation on a $2m \times 2m$ matrix. For typical co-simulation settings $m$ is small, so the cost is dominated by the slower subsystem query.

\subsection{DRS Inner Iteration Algorithm}
\label{subsec:drs_inner_iteration}

Taking $T_1 = S^n$ and $T_2 = J_L$, we obtain the DRS inner iteration for macro-step $n$:
\begin{equation}
\begin{aligned}
\text{(Response step)}\quad &\hat{b}^{n,k} := S^n(u^{n,k}), \\[4pt]
\text{(Projection step)}\quad &w^{n,k} := J_L\big(2\hat{b}^{n,k} - u^{n,k}\big), \\[4pt]
\text{(Update step)}\quad &u^{n,k+1} := u^{n,k} + w^{n,k} - \hat{b}^{n,k}.
\end{aligned}
\label{eq:drs_inner_iteration}
\end{equation}

Here $u^{n,k}$ is the current guess of the coupling-consistent incident wave, $\hat{b}^{n,k}$ the corresponding subsystem response, and $w^{n,k}$ the coupling projection. 
When $\hat{b}^{n,k} = w^{n,k}$, the response $\hat{b}^{n,k}$ already satisfies the coupling, the update is zero, and a fixed point is attained.

Section~\ref{sec:passivity_convergence} will prove that when $S^n$ and $J_L$ satisfy the FNE conditions, the DRS single-step operator is also FNE, and consequently the inner iteration satisfies Fejér monotonicity. The algorithm thus possesses a natural Lyapunov function, dissipating the distance to a fixed point at every step.

At a DRS fixed point, $u^\dagger = u^\dagger + J_L(2b^\star - u^\dagger) - b^\star$ with $b^\star := S^n(u^\dagger)$. Hence $J_L(2b^\star - u^\dagger) = b^\star$, and substituting $J_L = (2I-P)^{-1}$ gives
\begin{equation}
2b^\star - u^\dagger = (2I-P)b^\star \;\Longrightarrow\; u^\dagger = Pb^\star.
\label{eq:fixed_point_verification}
\end{equation}
Thus $b^\star = S^n(Pb^\star)$, the monolithic condition~\eqref{eq:fixed_point}.

\subsection{Finite-Iteration Termination and Macro-Step Advancement}
\label{subsec:finite_iteration}

In practice, only a finite number of inner iterations are executed per macro-step. Let $K_n \geq 0$ be the budget for macro-step $n$. The initialization is:
\begin{equation}
b^n := \operatorname{col}(b_A^n, b_B^n), \qquad u^{n,0} = P b^n, \qquad \hat{b}^{n,\star} = b^n.
\label{eq:initialization}
\end{equation}
If $K_n = 0$, the inner iteration is skipped and $\hat{b}^{n,\star} = b^n$; the algorithm reduces to an explicit wave interface $\hat{a}^n = P b^n$.

If $K_n > 0$, we execute the DRS inner iteration (Eq.~\eqref{eq:drs_inner_iteration}) for $K_n$ steps, taking the $\hat{b}^{n,k}$ computed in the last iteration:
\begin{equation}
\hat{b}^{n,\star} := \hat{b}^{n,K_n-1}.
\label{eq:final_response}
\end{equation}
The convergence criterion $\|u^{n,k+1} - u^{n,k}\| \leq \varepsilon$ can trigger early exit.

We then construct the coupling-consistent incident wave and advance the subsystems in parallel:
\begin{equation}
\hat{a}^n := P \hat{b}^{n,\star}, \quad
(x_i^{n+1}, b_i^{n+1}) = \Phi_i^{\Delta t}(x_i^n, \hat{a}_i^n), \quad i \in \{A, B\}.
\label{eq:macro_step_advance}
\end{equation}

\begin{algorithm}[t]
\caption{Wave-Domain Douglas-Rachford Iterative Coupling}
\label{alg:scattering_iterative_coupling}
\begin{algorithmic}[1]
\Require Initial states $x_A^0, x_B^0$; initial outgoing waves $b_A^0, b_B^0$; macro time step $\Delta t$; coupling matrix $P$ ($P^\top P = I$); inner iteration budget $K_n$ ($n = 0,1,\dots$); optional convergence tolerance $\varepsilon > 0$
\Ensure Updated states $x_i^{n+1}$ and outgoing waves $b_i^{n+1}$ for each macro-step

\For{$n = 0, 1, 2, \dots$}
    \State // Freeze state and form frozen port map
    \State Form $S^n = \operatorname{diag}(S_A^n, S_B^n)$ from $x_A^n, x_B^n$ and $\Delta t$
    \State // Initialize inner iteration
    \State $b^n \gets \operatorname{col}(b_A^n, b_B^n)$
    \State $u^{n,0} \gets P b^n$
    \State $\hat{b}^{n,\star} \gets b^n$
    \State // Inner iteration
    \For{$k = 0, 1, \dots, K_n-1$}
        \State $\hat{b}^{n,k} \gets S^n(u^{n,k})$ \Comment{Evaluate $S_A^n, S_B^n$ in parallel}
        \State $w^{n,k} \gets J_L(2\hat{b}^{n,k} - u^{n,k})$ \Comment{$J_L = (2I-P)^{-1}$, closed form}
        \State $u^{n,k+1} \gets u^{n,k} + w^{n,k} - \hat{b}^{n,k}$
        \State $\hat{b}^{n,\star} \gets \hat{b}^{n,k}$
        \If{$\|u^{n,k+1} - u^{n,k}\| \leq \varepsilon$}
            \State \textbf{break}
        \EndIf
    \EndFor
    \State // Macro-step
    \State $\hat{a}^n \gets P \hat{b}^{n,\star}$ \Comment{Coupling-consistent incident wave}
    \For{$i \in \{A, B\}$ in parallel}
        \State $(x_i^{n+1}, b_i^{n+1}) \gets \Phi_i^{\Delta t}(x_i^n, \hat{a}_i^n)$\Comment{one step integrator}
    \EndFor
\EndFor
\end{algorithmic}
\end{algorithm}

\section{Finite-Iteration Energy Safety and Convergence Guarantees}
\label{sec:passivity_convergence}

\subsection{Standing Conditions}
\label{subsec:standing_conditions}

The DRS algorithm of Section~III requires $S^n$ and $J_L$ to be firmly nonexpansive (FNE). The FNE property of $J_L$ follows automatically from the coupling structure (\S~\ref{subsec:fejer_monotonicity}). For $S^n$, we impose the following standing condition. The physical energy of the macro-step advancement is guaranteed by Condition~\ref{ass:discrete_passivity}.

\begin{condition}[Firm nonexpansiveness of frozen port map]
\label{ass:fne_portmap}
At every macro-step $n$, the frozen port map $S_i^n$ of each subsystem is firmly nonexpansive (FNE):
\begin{equation}
\|S_i^n(\alpha) - S_i^n(\beta)\|^2 \leq \big\langle S_i^n(\alpha) - S_i^n(\beta),\; \alpha - \beta \big\rangle, \quad \forall \alpha, \beta \in \mathbb{R}^m.
\label{eq:fne_inequality}
\end{equation}
Since $S^n = \operatorname{diag}(S_A^n, S_B^n)$, if each $S_i^n$ is FNE then the stacked map $S^n$ is also FNE.
\end{condition}

FNE is stronger than nonexpansiveness ($\|b\| \leq \|a\|$ in~\eqref{eq:wave_passivity}). For a linear scalar system $b = \rho \cdot a$, NE requires $\rho \in [-1, 1]$ (no signal amplification); FNE additionally requires $\rho \geq 0$ (no phase inversion).

Treating $S_i^n$ as a static map with input $a$, output $b$ in wave-domain I/O coordinates, the FNE inequality rearranges to
\begin{equation}
\|b_1 - b_2\|^2 \leq \langle b_1 - b_2, a_1 - a_2\rangle,
\label{eq:fne_rearranged}
\end{equation}
a static system has zero stored energy, so the inequality above states that $S_i^n$ is incrementally \textbf{output-strictly passive} (OSP). Condition~\ref{ass:fne_portmap} therefore requires $S_i^n$ to be OSP, a condition stricter than passivity.

By Minty's theorem \cite{bauschke2017convex}, a map is FNE if and only if it is the resolvent $S = (I + A)^{-1}$ of a maximal monotone operator $A$. Resolvents include the proximal operator and implicit Euler steps.

For a local analysis, linearize the frozen port map at the current operating point: $\delta b = J_S \,\delta a$, where $J_S$ is the Jacobian of $S_i^n$. Substituting into~\eqref{eq:fne_inequality} gives the matrix FNE condition
\begin{equation}
J_S^\top J_S \preceq \frac{J_S + J_S^\top}{2}.
\label{eq:general_fne}
\end{equation}
This inequality can be checked numerically from the subsystem dynamics, as demonstrated in Section~V. The following remark describes a common case in which the condition simplifies to a direct choice of $\gamma$.

\begin{remark}[$\gamma$-tuning when the port impedance is symmetric] 
\label{rem:gamma_rule}
Condition~\ref{ass:fne_portmap} simplifies when the incremental port response is reciprocal, meaning that the port variables $(e,f)$ satisfy $\delta e = Z_d \,\delta f$ with a symmetric positive definite impedance $Z_d = Z_d^\top \succ 0$. This condition is satisfied by standard mechanical ports (springs, dampers, inertias) and their electrical and hydraulic analogues. Gyroscopic terms violate reciprocity. Substituting the inverse scattering transform~\eqref{eq:inverse_scattering} gives
\begin{equation}
J_S = (Z_d - \gamma I)(Z_d + \gamma I)^{-1}.
\label{eq:js_expression}
\end{equation}
Since $Z_d$ is symmetric, $J_S$ is symmetric, and~\eqref{eq:general_fne} simplifies to $0 \preceq J_S \preceq I$. The eigenvalues of $Z_d$ are $\lambda_j > 0$, so the eigenvalues of $J_S$ are $(\lambda_j - \gamma)/(\lambda_j + \gamma)$. The condition $0 \preceq J_S \preceq I$ is therefore equivalent to $\lambda_j \geq \gamma$ for all $j$, which is
\begin{equation}
Z_d \succeq \gamma I.
\label{eq:impedance_condition}
\end{equation}
Choosing $\gamma \leq \lambda_{\min}(Z_d)$ guarantees Condition~\ref{ass:fne_portmap} locally. The impedance $Z_d$ can be obtained from a linearization of the subsystem dynamics at the current operating point. For non-reciprocal ports (e.g., systems with gyroscopic terms),~\eqref{eq:general_fne} should be checked directly.
\end{remark}

\begin{condition}[Discrete passivity of subsystem integrator]
\label{ass:discrete_passivity}
For each subsystem $i \in \{A, B\}$, its discrete-time advancement map $\Phi_i^{\Delta t}$ satisfies, for all macro-steps $n$:
\begin{equation}
H_i(x_i^{n+1}) - H_i(x_i^n) \leq \frac{1}{2}\big(\|a_i^n\|^2 - \|b_i^{n+1}\|^2\big),
\label{eq:discrete_passivity_scattering}
\end{equation}
where $a_i^n$ is the incident wave used to advance subsystem $i$, and $b_i^{n+1}$ is the outgoing wave produced after the advancement.
\end{condition}

This is the discrete-time counterpart of the continuous passivity $\dot{H}_i \leq e_i^\top f_i$ (Eq.~\eqref{eq:continuous_passivity}). The change in stored physical energy is bounded above by the net wave energy supplied through the port.

\begin{remark}[Integrators that satisfy Condition~\ref{ass:discrete_passivity}]
Discrete-gradient methods~\cite{robert2014Discrete,celledoni2012preserving} and algebraically stable implicit Runge--Kutta methods (Gauss, Radau~IIA, Lobatto~IIIC)~\cite{wanner1996solving} satisfy Condition~\ref{ass:discrete_passivity} by construction. For quadratic Hamiltonians these reduce to the implicit midpoint rule.
\end{remark}

\subsection{Fejér Monotonicity of DRS}
\label{subsec:fejer_monotonicity}

Condition~\ref{ass:fne_portmap} ensures $S^n$ is FNE. We now show that $J_L$ inherits the same property from the coupling structure. Since $P^\top P = I$, $L := I-P$ satisfies $x^\top L x = \|x\|^2 - x^\top P x \ge 0$, hence $L$ is linear maximal monotone. By Minty's theorem, $J_L = (I+L)^{-1}$ is FNE. For $P=\begin{bmatrix}0&I\\I&0\end{bmatrix}$ the closed form is $J_L(x)=\frac13(2x+Px)$.

Both $S^n$ and $J_L$ are therefore FNE, hence their reflectors $R_S^n := 2S^n - I$ and $R_L := 2J_L - I$ are nonexpansive (NE). The DRS single-step operator $T_{\text{DR}}^n := \frac{1}{2}(I + R_L R_S^n)$ is therefore FNE~\cite{bauschke2017convex}, and the iterates satisfy Fejér monotonicity: for any fixed point $u^{n,\dagger}$,
\begin{equation} 
\|u^{n,k+1} - u^{n,\dagger}\|^2 \leq \|u^{n,k} - u^{n,\dagger}\|^2 - \|u^{n,k+1} - u^{n,k}\|^2.
\label{eq:fejer}
\end{equation}

Hence, the DRS inner iteration dissipates its algorithmic storage relative to a fixed point at every step.

\subsection{Theorem 1: Finite-Iteration Augmented Storage Inequality}
\label{subsec:theorem1}
Theorem~1 follows from adding the physical energy inequality (Condition~\ref{ass:discrete_passivity}) and the algorithmic Lyapunov inequality (Fejér monotonicity).

Consider macro-step $n$. On the physical side, Condition~\ref{ass:discrete_passivity} applied to each subsystem and summed gives:
\begin{equation}
\begin{aligned}
\big(H_A(x_A^{n+1}) + H_B(x_B^{n+1})\big) - \big(H_A(x_A^n) + H_B(x_B^n)\big)\\
\leq \frac{1}{2}\big(\|\hat{a}^n\|^2 - \|b^{n+1}\|^2\big),
\label{eq:physical_sum}
\end{aligned}
\end{equation}
where $\hat{a}^n = P \hat{b}^{n,\star}$ is the incident wave used to drive the macro-step after inner iteration terminates, and $b^{n+1} = \operatorname{col}(b_A^{n+1}, b_B^{n+1})$ is the outgoing wave produced after advancement.

On the algorithmic side, Fejér monotonicity from \S~\ref{subsec:fejer_monotonicity} summed over $K_n$ steps yields:
\begin{equation}
\frac{1}{2}\|u^{n,K_n} - u^{n,\dagger}\|^2 - \frac{1}{2}\|u^{n,0} - u^{n,\dagger}\|^2 \leq 0.
\label{eq:fejer_sum}
\end{equation}

Adding~\eqref{eq:physical_sum} and~\eqref{eq:fejer_sum} and defining
\begin{equation}
\mathcal{V}^n := H_A(x_A^n) + H_B(x_B^n) + \frac{1}{2}\|u^{n,K_n} - u^{n,\dagger}\|^2,
\label{eq:augmented_storage_def}
\end{equation}
we obtain the following theorem. The quantity $\mathcal{V}^n$ is defined per macro-step $n$ and does not chain across steps.

\begin{theorem}[Finite-iteration Augmented Storage Inequality]
\label{thm:augmented_storage}
Assume Condition~\ref{ass:fne_portmap} and Condition~\ref{ass:discrete_passivity} hold, and that the monolithic solution $b^{n,\star}$ (defined in \S~\ref{subsec:frozen_port_map}) exists at macro-step $n$. Let the inner iteration of macro-step $n$ be executed for $K_n \geq 0$ steps according to Algorithm~\ref{alg:scattering_iterative_coupling}, and let the macro-step be advanced with $\hat{b}^{n,\star}$ and incident wave $\hat{a}^n = P \hat{b}^{n,\star}$. Then
\begin{equation}
\begin{aligned}
&\big(H_A(x_A^{n+1}) + H_B(x_B^{n+1})\big) - \big(H_A(x_A^n) + H_B(x_B^n)\big) \\
&\quad + \frac{1}{2}\|u^{n,K_n} - u^{n,\dagger}\|^2 - \frac{1}{2}\|u^{n,0} - u^{n,\dagger}\|^2 \\
&\qquad \leq \frac{1}{2}\big(\|\hat{a}^n\|^2 - \|b^{n+1}\|^2\big).
\end{aligned}
\label{eq:augmented_storage_inequality}
\end{equation}
\end{theorem}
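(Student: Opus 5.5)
The proof adds two independent inequalities: the physical energy inequality for the macro-step advancement and the telescoped Fejér inequality for the inner DRS iteration. Their sum is exactly \eqref{eq:augmented_storage_inequality}. I will set the ingredients up in order.

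\emph{Physical side.} By Algorithm~\ref{alg:scattering_iterative_coupling}, each subsystem is advanced by $(x_i^{n+1},b_i^{n+1})=\Phi_i^{\Delta t}(x_i^n,\hat a_i^n)$, with $\hat a^n=P\hat b^{n,\star}$. Condition~\ref{ass:discrete_passivity} holds for any incident wave, so I apply it with $a_i^n:=\hat a_i^n$ for $i\in\{A,B\}$. Summing the two inequalities and using $\|\hat a^n\|^2=\|\hat a_A^n\|^2+\|\hat a_B^n\|^2$ and $\|b^{n+1}\|^2=\|b_A^{n+1}\|^2+\|b_B^{n+1}\|^2$ gives \eqref{eq:physical_sum}. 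This step does not use $K_n$ or the quality of $\hat b^{n,\star}$ at all.

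\emph{Algorithmic side.} First I must exhibit a fixed point $u^{n,\dagger}$ of $T_{\mathrm{DR}}^n$. Given the assumed monolithic solution $b^{n,\star}=S^n(Pb^{n,\star})$, set $u^{n,\dagger}:=Pb^{n,\star}$. Then $S^n(u^{n,\dagger})=b^{n,\star}$. Using $(2I-P)b^{n,\star}=2b^{n,\star}-Pb^{n,\star}$, we get $J_L(2b^{n,\star}-u^{n,\dagger})=b^{n,\star}$, so the update in \eqref{eq:drs_inner_iteration} vanishes and $u^{n,\dagger}$ is a fixed point; this is the converse of \eqref{eq:fixed_point_verification}.

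Next I check that the update \eqref{eq:drs_inner_iteration} coincides with $u^{n,k+1}=T_{\mathrm{DR}}^n u^{n,k}$, where $T_{\mathrm{DR}}^n=\tfrac12(I+R_LR_S^n)$. Expanding $\tfrac12(u+(2J_L-I)(2S^nu-u))$ gives $u+J_L(2S^nu-u)-S^nu$, as required. Condition~\ref{ass:fne_portmap} together with the FNE property of $J_L$ established in \S\ref{subsec:fejer_monotonicity} yields the Fejér inequality \eqref{eq:fejer} for each $k$. Dropping the nonpositive term $-\|u^{n,k+1}-u^{n,k}\|^2$ and telescoping over $k=0,\dots,K_n-1$ gives \eqref{eq:fejer_sum}; for $K_n=0$ the left side is trivially zero. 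If the tolerance test breaks early at iteration $k^\ast$, I interpret $u^{n,K_n}$ as the last computed iterate $u^{n,k^\ast+1}$, and the same telescoping applies.

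\emph{Conclusion and main obstacle.} Adding \eqref{eq:physical_sum} and the halved version of \eqref{eq:fejer_sum} gives \eqref{eq:augmented_storage_inequality} immediately. I expect the only genuinely delicate step to be well-posedness of $u^{n,\dagger}$: it must be a true fixed point of the same operator used in the iteration. The existence of $b^{n,\star}$ secures this through the construction above. It also matters that $u^{n,\dagger}$ is fixed within macro-step $n$, since $S^n$ changes with $n$, which is why the statement is per macro-step and does not chain across steps. A further subtlety is the direction of the FNE inequality for $T_{\mathrm{DR}}^n$, i.e.\ that the averaged composition of two reflectors is FNE. I take this from \cite{bauschke2017convex} as the paper does, and confirm it by noting that $R_LR_S^n$ is nonexpansive as a composition of nonexpansive maps.
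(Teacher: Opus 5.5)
Your proposal is correct and follows the paper's proof: sum Condition~\ref{ass:discrete_passivity} over $i\in\{A,B\}$ with $a_i^n=\hat a_i^n$ to get \eqref{eq:physical_sum}, telescope the Fej\'er inequality \eqref{eq:fejer} to get \eqref{eq:fejer_sum}, and add the two. You also verify that $u^{n,\dagger}:=Pb^{n,\star}\in\operatorname{Fix}(T_{\text{DR}}^n)$, so the fixed-point set is nonempty, and that \eqref{eq:drs_inner_iteration} is exactly $u^{n,k+1}=T_{\text{DR}}^n u^{n,k}$; both are details the paper leaves implicit.
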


\begin{proof}
Summing Condition~\ref{ass:discrete_passivity} over the subsystems yields~\eqref{eq:physical_sum}. Fejér monotonicity from \S~\ref{subsec:fejer_monotonicity} yields~\eqref{eq:fejer_sum}. Adding the two inequalities gives~\eqref{eq:augmented_storage_inequality}.
\end{proof}

Inequality~\eqref{eq:augmented_storage_inequality} states that the change in physical energy $\Delta H = (H_A+H_B)(x^{n+1}) - (H_A+H_B)(x^n)$, reduced by the Fejér decrease of the algorithmic storage, is bounded above by the net wave energy $\frac{1}{2}(\|\hat{a}^n\|^2 - \|b^{n+1}\|^2)$ supplied through the interface. When the inner iteration has not fully converged, the Fejér decrease is strictly positive and provides a dissipation margin that absorbs the interface mismatch.

The fixed point $u^{n,\dagger}$ is never computed online. The theorem guarantees that a nonnegative term $\frac{1}{2}\|u^{n,0}-u^{n,\dagger}\|^2 - \frac{1}{2}\|u^{n,K_n}-u^{n,\dagger}\|^2$ exists such that the energy inequality holds at every macro-step. The practical guarantee follows from Fejér monotonicity of the DRS iterates, which holds without explicit value of $u^{n,\dagger}$.

\subsection{Theorem 2: Hard-Coupling Limit Recovers Monolithic Discretization}
\label{subsec:theorem2}

Theorem~\ref{thm:augmented_storage} guarantees energy safety for any finite $K_n$. We now show that as $K_n \to \infty$, the partitioned update converges macro-step by macro-step to a monolithic discrete-time trajectory.

By Condition~\ref{ass:fne_portmap} and Minty's theorem, there exists a maximal monotone operator $A^n$ such that $S^n = J_{A^n} = (I + A^n)^{-1}$. $L = I-P$ is linear maximal monotone (\S~\ref{subsec:fejer_monotonicity}). Hence the DRS iteration solves $0 \in A^n(b) + (I-P)b$, the zero of the sum of two monotone operators. The operator $A^n$ is used only for analysis and is never constructed in the actual algorithm.

\begin{lemma}[DRS inner iteration converges to monolithic wave solution]
\label{lem:drs_convergence}
Assume Condition~\ref{ass:fne_portmap} holds and the monolithic solution $b^{n,\star}$ exists at macro-step $n$. Then as $k \to \infty$, the DRS auxiliary sequence $u^{n,k}$ converges to some fixed point $u^{n,\dagger} \in \operatorname{Fix}(T_{\text{DR}}^n)$, and the sequence $\hat{b}^{n,k} = J_{A^n}(u^{n,k})$ converges to the monolithic outgoing wave $b^{n,\star}$.
\end{lemma}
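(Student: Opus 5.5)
The plan is to treat the inner iteration as a Krasnosel'ski\u{\i}--Mann iteration of the firmly nonexpansive operator $T_{\text{DR}}^n=\frac12(I+R_LR_S^n)$ from \S~\ref{subsec:fejer_monotonicity}. The argument then has three parts: show the fixed-point set is nonempty, invoke weak (here, finite-dimensional, hence strong) convergence of the iterates, and transfer that convergence to the shadow sequence $\hat b^{n,k}=S^n(u^{n,k})$ by continuity.

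First, nonemptiness of $\operatorname{Fix}(T_{\text{DR}}^n)$. Let $b^{n,\star}$ be the assumed monolithic solution, so $b^{n,\star}=S^n(Pb^{n,\star})$. I set $u^\dagger:=Pb^{n,\star}$ and check the fixed-point equation directly. We have $S^n(u^\dagger)=b^{n,\star}$. Then $2b^{n,\star}-u^\dagger=(2I-P)b^{n,\star}$, so $J_L(2b^{n,\star}-u^\dagger)=b^{n,\star}$. The update is therefore $u^\dagger+b^{n,\star}-b^{n,\star}=u^\dagger$. This is exactly the converse of the computation in \eqref{eq:fixed_point_verification}, so $\operatorname{Fix}(T_{\text{DR}}^n)\neq\emptyset$.

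Second, convergence of $u^{n,k}$. The map $T_{\text{DR}}^n$ is FNE, hence $\tfrac12$-averaged, and has a fixed point. Fejér monotonicity \eqref{eq:fejer} gives boundedness of $(u^{n,k})$. Telescoping \eqref{eq:fejer} gives $\sum_k\|u^{n,k+1}-u^{n,k}\|^2<\infty$, so $u^{n,k}-T_{\text{DR}}^nu^{n,k}\to0$. In $\mathbb{R}^{2m}$, a bounded sequence has a convergent subsequence $u^{n,k_j}\to\bar u$. Since $T_{\text{DR}}^n$ is continuous, $\bar u\in\operatorname{Fix}(T_{\text{DR}}^n)$. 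Fejér monotonicity with respect to $\bar u$ then makes $\|u^{n,k}-\bar u\|$ nonincreasing, with a subsequence tending to $0$, so the whole sequence converges. Alternatively, one can cite the standard DRS convergence theorem in \cite{bauschke2017convex}. Call the limit $u^{n,\dagger}$.

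Third, the shadow sequence. $S^n$ is FNE, hence $1$-Lipschitz, so $\hat b^{n,k}=S^n(u^{n,k})\to S^n(u^{n,\dagger})=:b^\dagger$. By \eqref{eq:fixed_point_verification}, $u^{n,\dagger}=Pb^\dagger$ and $b^\dagger=S^n(Pb^\dagger)$, so $b^\dagger$ solves the monolithic interface condition \eqref{eq:fixed_point}. I expect the main obstacle to be identifying this limit with the given $b^{n,\star}$, because the limit fixed point $u^{n,\dagger}$ need not be the particular one constructed in the first step.

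To handle that obstacle I would prove uniqueness of the monolithic outgoing wave, using the monotone-operator structure. Any solution satisfies $0\in A^n(b)+(I-P)b$. Take two solutions $b_1,b_2$ and write $a_i=Pb_i$, so $b_i=S^n(a_i)$. The FNE inequality gives $\|b_1-b_2\|^2\le\langle b_1-b_2,P(b_1-b_2)\rangle$. With $d=b_1-b_2$ this reads $\langle d,(I-P)d\rangle\le0$, and monotonicity of $L$ then forces $\langle d,Ld\rangle=0$. That is not yet $d=0$. Since $L$ is symmetric positive semidefinite for the swap $P$, this gives $Pd=d$, i.e.\ $d_A=d_B$.

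I would then use the FNE inequality blockwise. It shows each $S_i^n$ is a resolvent, and the monolithic equations $d_A=S_A^n$-increment with input $d_B$ give $\|d_A\|^2\le\langle d_A,d_B\rangle$, which is an equality here. If this equality case does not yield $d=0$ directly, the fallback is to note that all solutions share the same shadow. This holds because for DRS the shadow $J_{A}(u^\dagger)$ over $\operatorname{Fix}$ is unique whenever one operator is strongly monotone or the FNE inequality is strict. Failing that, I would read $b^{n,\star}$ in the statement as the monolithic solution that is the limit, i.e.\ define $b^{n,\star}:=S^n(u^{n,\dagger})$, and remark on this.
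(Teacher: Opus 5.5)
Your argument follows the same route as the paper's proof. $T_{\text{DR}}^n$ is firmly nonexpansive, the standard finite-dimensional DRS argument gives $u^{n,k}\to u^{n,\dagger}\in\operatorname{Fix}(T_{\text{DR}}^n)$, and $1$-Lipschitz continuity of $S^n=J_{A^n}$ carries this over to $\hat b^{n,k}$. You are more explicit than the paper in two places: you verify that $Pb^{n,\star}\in\operatorname{Fix}(T_{\text{DR}}^n)$, so the fixed-point set is nonempty, and you write out the Fej\'er/cluster-point argument instead of citing it.

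The identification obstacle you raise is real. The paper's proof passes over it by simply writing $J_{A^n}(u^{n,\dagger})=b^{n,\star}$. You should, however, drop the uniqueness attempt and the ``same shadow'' fallback, because neither can succeed under Condition~\ref{ass:fne_portmap} alone.

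\begin{itemize}
\item Because $L$ is single-valued, $u\mapsto S^n(u)$ is a bijection from $\operatorname{Fix}(T_{\text{DR}}^n)$ onto the set of monolithic solutions, with inverse $b\mapsto Pb$. So ``all fixed points have the same shadow'' is just uniqueness restated.
\item Uniqueness genuinely fails. Take $S_A^n=S_B^n=I$, which is FNE with equality and corresponds to $A^n=0$. Every $b$ with $b_A=b_B$ then solves $b=S^n(Pb)$.
\item In that example the iteration reduces to $u^{n,k+1}=J_Lu^{n,k}$. Here $J_L$ acts as the identity on $\{Pu=u\}$ and as $\frac13 I$ on $\{Pu=-u\}$. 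Hence $\hat b^{n,k}=u^{n,k}\to\frac12(b^n+Pb^n)$.
\item This limit is fixed by the initialization $u^{n,0}=Pb^n$, not by whichever $b^{n,\star}$ was assumed. So the statement, read with an arbitrary prescribed $b^{n,\star}$, is false.
\end{itemize}

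Your last option is therefore the correct conclusion. The lemma holds with $b^{n,\star}$ read as the monolithic solution $S^n(u^{n,\dagger})$ selected by the iteration. Alternatively, it holds for a prescribed $b^{n,\star}$ under an extra uniqueness hypothesis, such as strict monotonicity of $A^n$ as in Remark~\ref{rem:uniqueness}. The paper's proof tacitly uses the first of these readings.
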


\begin{proof}
In finite-dimensional spaces, DRS iteration for the sum of maximal monotone operators converges strongly \cite{lions1979splitting,bauschke2017convex}. Condition~\ref{ass:fne_portmap} and Minty's theorem give $S^n = J_{A^n}$ with $A^n$ maximal monotone. $L = I-P$ is linear maximal monotone (\S~\ref{subsec:fejer_monotonicity}). The standard DRS convergence theorem directly yields $u^{n,k} \to u^{n,\dagger} \in \operatorname{Fix}(T_{\text{DR}}^n)$. By Lipschitz continuity of the resolvent (implied by FNE), $\hat{b}^{n,k} = J_{A^n}(u^{n,k}) \to J_{A^n}(u^{n,\dagger}) = b^{n,\star}$.
\end{proof}

\begin{theorem}[Hard-coupling limit recovers monolithic discretization]
\label{thm:kn_to_infty_monolithic}
Assume Condition~\ref{ass:fne_portmap} holds at every macro-step, the monolithic solution exists, and the subsystem advancement maps $\Phi_i^{\Delta t}$ and frozen port maps $S_i^n$ are continuous in state and incident wave over compact trajectory sets. Then for any finite time interval $T = N \Delta t$, if $K_n \to \infty$ for all $n = 0, \dots, N-1$, the partitioned trajectory converges pointwise on the grid to an admissible monolithic discrete trajectory, one that satisfies the monolithic interface condition~\eqref{eq:monolithic_interface} at every macro-step.
\end{theorem}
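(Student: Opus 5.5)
The plan is an induction on the macro-step index $n = 0,\dots,N-1$. Lemma~\ref{lem:drs_convergence} supplies the single-step limit. Continuity of $\Phi_i^{\Delta t}$ and $S_i^n$ then propagates convergence from one grid point to the next. To make the statement precise, I would index the partitioned trajectory by the budget vector $K = (K_0,\dots,K_{N-1})$ and write it as $x^{n}(K)$, $b^{n}(K)$. The claim is that $x^n(K) \to \bar x^n$ and $b^n(K) \to \bar b^n$ as $\min_n K_n \to \infty$. Here $(\bar x^n, \bar b^n)$ is generated by the monolithic recursion: $\bar b^{n,\star}$ solves $\bar b^{n,\star} = \bar S^n(P\bar b^{n,\star})$, where $\bar S^n$ is the frozen port map at $\bar x^n$, and $(\bar x_i^{n+1}, \bar b_i^{n+1}) = \Phi_i^{\Delta t}(\bar x_i^n, (P\bar b^{n,\star})_i)$. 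The base case is trivial, since $x^0(K) = \bar x^0$ and $b^0(K)=\bar b^0$ are the given initial data.

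For the inductive step, assume $x^n(K)\to\bar x^n$ and $b^n(K) \to \bar b^n$. The difficulty is that the frozen map $S^n(K)$ depends on $K$ through $x^n(K)$, so Lemma~\ref{lem:drs_convergence} cannot be applied verbatim. I would split the error as
\[
\|\hat b^{n,\star}(K) - \bar b^{n,\star}\| \le \|\hat b^{n,K_n-1}(K) - b^{n,\star}(K)\| + \|b^{n,\star}(K) - \bar b^{n,\star}\|,
\]
where $b^{n,\star}(K)$ is the monolithic solution for the perturbed frozen map.

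The second term requires stability of the zero set of $A^n + (I-P)$ under perturbation of the resolvent $S^n$. If the solution is unique, this follows from continuity of $S$ in the state together with a closed-graph argument: bounded solutions have cluster points, and every cluster point solves the limit equation. Boundedness comes from nonexpansiveness of $S^n$ and $P$ on the compact trajectory set. Uniqueness does not hold in general, which is why the theorem says "an admissible" monolithic trajectory. In that case I would pass to a subsequence and define $\bar b^{n,\star}$ as a limit point, whose admissibility is again given by the closed-graph argument.

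The first term requires a DRS convergence rate that is uniform in the perturbed operator. This is the main obstacle, because Fejér monotonicity~\eqref{eq:fejer} gives convergence but no rate. The route I would try first is to allow $K_n\to\infty$ sequentially, taking $K_0$ large first, then $K_1$, and so on. For each fixed prefix, the frozen map at step $n$ is then fixed, and Lemma~\ref{lem:drs_convergence} applies directly. A diagonal argument then yields pointwise convergence on the grid. If a genuinely simultaneous limit is required, I would instead use asymptotic regularity of DRS. Fejér monotonicity gives $\sum_k\|u^{k+1}-u^k\|^2 \le \|u^0-u^\dagger\|^2$, and this bound is uniform because $u^0 = Pb^n(K)$ and the fixed points stay in a bounded set. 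Hence the residual $\|u^{K}-T_{\text{DR}}u^{K}\|$ tends to zero uniformly. Combining this with continuity of $T_{\text{DR}}$ in the state, every cluster point of $u^{K_n}$ is a fixed point of the limit operator $\bar T_{\text{DR}}^n$, and by~\eqref{eq:fixed_point_verification} the corresponding $\hat b$ solves the limit monolithic equation.

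Finally, $\hat a^n(K) = P\hat b^{n,\star}(K) \to P\bar b^{n,\star}$. Continuity of $\Phi_i^{\Delta t}$ in both the state and the incident wave then gives $x^{n+1}(K)\to\bar x^{n+1}$ and $b^{n+1}(K)\to\bar b^{n+1}$, which closes the induction. Because each step satisfies $\bar b^{n,\star}=\bar S^n(P\bar b^{n,\star})$, the limit trajectory satisfies the monolithic interface condition~\eqref{eq:monolithic_interface} at every macro-step. Since there are only finitely many steps $N$, the subsequence extractions remain finite in number.
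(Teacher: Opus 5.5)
Your skeleton matches the paper's proof: induction on $n$, Lemma~\ref{lem:drs_convergence} for the inner limit, and continuity of $\Phi_i^{\Delta t}$ to reach step $n+1$. The paper simply invokes Lemma~\ref{lem:drs_convergence} at step $n$ as if the frozen map were already $\bar S^n$, so it never deals with the dependence of $S^n$ on the earlier budgets. Your error split isolates exactly that issue, which is a real improvement.

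\textbf{The gap.} Both of your closing arguments need the perturbed monolithic solutions, equivalently $\operatorname{Fix}(T_{\text{DR}}^n)$, to stay bounded. This enters through the second term, which your greedy sequential route also needs, since for a fixed prefix Lemma~\ref{lem:drs_convergence} only delivers the perturbed solution. It also enters through the uniform bound on $\|u^{n,0}-u^{n,\dagger}\|$ in your simultaneous-limit route. Boundedness does not follow from nonexpansiveness of $S^n$ and orthogonality of $P$, because these give no control of the component of $b$ in $\ker(I-P)=\{b: b_A=b_B\}$. Take $m=1$:
\begin{itemize}
\item $S_A^n=S_B^n=I$ (zero-flow ports) is FNE, and every $b$ with $b_A=b_B$ solves $b=S^n(Pb)$.
\item The family $S_A^n(u)=S_B^n(u)=(u+r)/(1+r^2)$ with $r=\|x^n-\bar x^n\|$ is FNE, jointly continuous in state and wave, and tends locally uniformly to $I$. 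Yet for $r>0$ its unique monolithic solution $b_A=b_B=1/r$ escapes to infinity.
\end{itemize}
So without further input there may be no bounded subsequence to extract in the non-unique case, and the closed-graph argument has nothing to act on.

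\textbf{How to repair it.} In the unique case the bound does hold, but it comes from maximal monotonicity, not nonexpansiveness. Pointwise convergence of the resolvents $S^n\to\bar S^n$ gives graphical convergence of $A^n+(I-P)$ to $\bar A^n+(I-P)$. A nonempty bounded zero set puts $0$ in the interior of the domain of $(\bar A^n+(I-P))^{-1}$. A standard monotonicity argument then bounds the perturbed zero sets uniformly, eventually. With that, your second route gives the genuine joint limit. You also need that $\|u^{k+1}-u^k\|$ is nonincreasing, since $T_{\text{DR}}^n$ is nonexpansive, to turn summability into $\|u^{K}-T_{\text{DR}}^n u^{K}\|^2\le\|u^{0}-u^{\dagger}\|^2/(K+1)$.

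Outside the unique case, you must take boundedness explicitly from the compact-trajectory hypothesis.

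If you only want what the paper's short argument actually supports, there is a repair that needs no boundedness: take the iterated limit with $K_0$ innermost. With the $\varepsilon$-exit disabled, each finite-budget macro-step is a finite composition of continuous maps. So $K_0\to\infty$ with $K_1,\dots,K_{N-1}$ fixed replaces step $0$ by its exact monolithic update. Then $K_1\to\infty$ acts at the exact state $\bar x^1$, and so on. Lemma~\ref{lem:drs_convergence} is then only ever applied to a fixed frozen map.
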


\begin{proof}
By induction on $n$. For $n=0$ the initial states agree. Suppose the partitioned state $x^n$ has converged to the monolithic state. At macro-step $n$, Lemma~\ref{lem:drs_convergence} gives $\hat{b}^{n,\star} \to b^{n,\star}$, hence $\hat{a}^n = P \hat{b}^{n,\star} \to a^{n,\star}$. By continuity of $\Phi_i^{\Delta t}$ at $(x_i^n, \hat{a}_i^n)$,
\begin{equation}
(x_i^{n+1}, b_i^{n+1}) = \Phi_i^{\Delta t}(x_i^n, \hat{a}_i^n) \to \Phi_i^{\Delta t}(x_i^n, a_i^{n,\star}),
\end{equation}
and $\Phi_i^{\Delta t}(x_i^n, a_i^{n,\star})$ is the monolithic update at macro-step $n$, since $(a^{n,\star}, b^{n,\star})$ satisfies the monolithic condition~\eqref{eq:monolithic_interface}. The induction step is complete.
\end{proof}

\begin{remark}[Uniqueness of the monolithic solution]
\label{rem:uniqueness}
Condition~\ref{ass:fne_portmap} and Minty's theorem give $S^n = (I+A^n)^{-1}$ with $A^n$ maximal monotone. The monolithic interface condition is equivalent to $0 \in A^n(b) + (I-P)b$. When $A^n$ is strictly monotone, $A^n + (I-P)$ is strictly monotone and the monolithic solution at macro-step $n$ is unique. Strict monotonicity holds, for example, for positive stiffness springs, positive damping, and monotone nonlinearities.

When $A^n$ is monotone but not strictly monotone, multiple solutions may coexist. Physical sources include Coulomb friction, ideal plasticity, and statically indeterminate contact. Fej\'er monotonicity implies that DRS selects the fixed point closest to the initialization $u^{n,0}=P b^n$, a wave-domain minimal-jump selection. The same non-uniqueness is present in the monolithic problem. The coupling method inherits it.
\end{remark}

% !TEX root = main.tex
\section{Experiments: Two-Oscillator Benchmark}
\label{sec:experiments}

We validate the iterative coupling framework on a minimal nontrivial benchmark, two coupled oscillators split into two subsystems with one wave port each. Its difficulty comes from a \emph{mixed Duffing--linear} configuration. Subsystem~A includes a stiff hardening spring nonlinearity, so the frozen discrete-time port maps are state-dependent and cannot be reduced to a globally linear impedance bound. The hardening spring $k_{\mathrm{nl}} q_1^3$ has derivative $3k_{\mathrm{nl}} q_1^2 \ge 0$ and $k_1 > 0$, so the subsystem constitutive relation is strictly monotone. By Remark~4 the monolithic solution at each macro-step is unique for this benchmark. The experiments target three claims from Sections~II--IV: (i) Condition~\ref{ass:fne_portmap} (FNE margins), (ii) the augmented-storage energy guarantee under finite inner iterations (Theorem~\ref{thm:augmented_storage}), and (iii) convergence to a monolithic reference trajectory as $K_n\to\infty$ (Theorem~\ref{thm:kn_to_infty_monolithic}).
\begin{figure}[h]
  \centering
  \includegraphics[width=\linewidth]{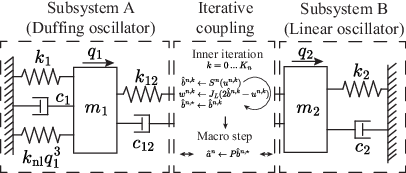}%
  \caption{Schematic of the two-oscillator benchmark.}
  \label{fig:twoosc_schematic}
\end{figure}
\subsection{Benchmark and settings.}
Fig.~\ref{fig:twoosc_schematic} depicts the subsystem decomposition and ports. Each subsystem exposes a single port $(e_i,f_i)$ and is interfaced through wave variables $(a_i,b_i)$ with lossless coupling $a=Pb$ (Section~\ref{subsec:scattering}). Both subsystems are time-discretized by a passivity-preserving discrete-gradient method, so Condition~\ref{ass:discrete_passivity} holds by construction. For the linear subsystem, this reduces to an implicit midpoint/trapezoidal update. All model and simulation parameters are listed in Table~\ref{tab:twoosc_params}.

\begin{table}[!t]
  \caption{Two-oscillator benchmark parameters and simulation settings.}
  \label{tab:twoosc_params}
  \centering
  \renewcommand{\arraystretch}{1.1}
  \begin{tabular}{l c l c}
    \hline
    Parameter & Value & Parameter & Value \\
    \hline
    $m_1$ & $8.0\;\mathrm{kg}$ & $m_2$ & $4.0\;\mathrm{kg}$ \\
    $k_1$ & $100\;\mathrm{N\,m^{-1}}$ & $k_2$ & $50\;\mathrm{N\,m^{-1}}$ \\
    $c_1$ & $0$ & $c_2$ & $0$ \\
    $k_{12}$ & $120\;\mathrm{N\,m^{-1}}$ & $c_{12}$ & $0.05\;\mathrm{N\,s\,m^{-1}}$ \\
    $k_{\mathrm{nl}}$ & $8000\;\mathrm{N\,m^{-3}}$ & $q_1(0)$ & $0.4\;\mathrm{m}$ \\
    $q_2(0)$ & $0$ & $v_1(0)=v_2(0)$ & $0$ \\
    $\Delta t$ & $0.01\;\mathrm{s}$ & $T$ & $10.0\;\mathrm{s}$ \\
    $\gamma$ & $0.4\;\mathrm{N\,s\,m^{-1}}$ & $K_n$ & $\{0,3,8,20,35,50\}$ \\
    \hline
  \end{tabular}
\end{table}

The scattering impedance is set to $\gamma = 0.4\ \mathrm{N\,s\,m^{-1}}$. Remark~\ref{rem:gamma_rule} states that Condition~\ref{ass:fne_portmap} holds locally when the incremental port impedance satisfies $Z_d \succeq \gamma I$. The linear subsystem yields $\lambda_{\min}(Z_d) \approx 0.5\ \mathrm{N\,s\,m^{-1}}$ at $\Delta t = 0.01\ \mathrm{s}$. The Duffing subsystem has state-dependent impedance. The hardening spring $k_{\mathrm{nl}}>0$ keeps its contribution positive. Choosing $\gamma = 0.4$ below the linear bound accommodates this variation. Condition~\ref{ass:fne_portmap} is numerically supported along the realized trajectories by the sampled FNE margins in Fig.~\ref{fig:twoosc_fne_passivity}(a).

The open-source C++ implementation of this benchmark, including pre-computed results and scripts to reproduce the paper figures, is available at \url{https://github.com/V7CN/wave-domain-iterative-coupling}.

\subsection{Results and analysis.} 
Fig.~\ref{fig:twoosc_traj} compares the trajectories $q_1(t)$ and $q_2(t)$ for different inner-iteration budgets $K_n$ against a monolithic reference computed with the same passivity-preserving discrete-gradient time stepper and time step $\Delta t$.

\begin{figure}[ht] 
  \centering
  \includegraphics[width=\linewidth]{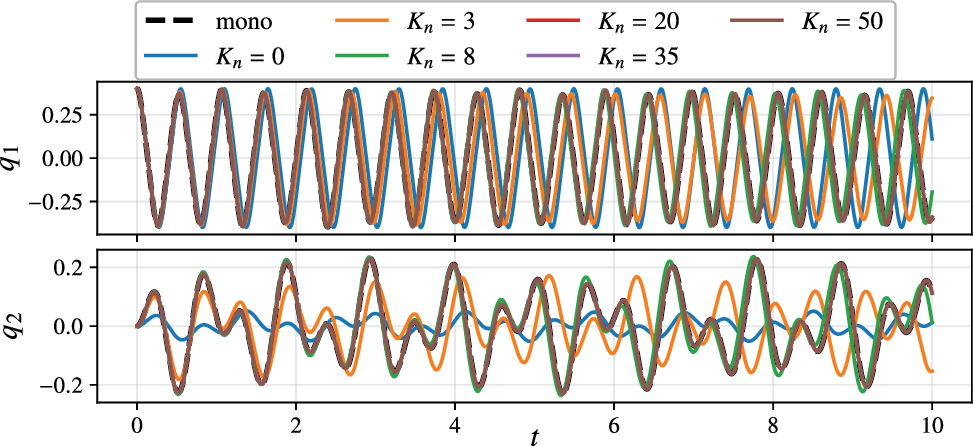}%
  \caption{Trajectories $(q_1,q_2)$ for different inner-iteration budgets $K_n$ and the monolithic reference.}
  \label{fig:twoosc_traj}
\end{figure}

We check Condition~\ref{ass:fne_portmap} at each macro-step $n$ by estimating the margin $\Delta_i^n(\alpha,\beta) := \langle S_i^n(\alpha)-S_i^n(\beta),\,\alpha-\beta\rangle - \|S_i^n(\alpha)-S_i^n(\beta)\|^2$ on 32 randomly sampled incident wave pairs $(\alpha,\beta)$. The inequality holds at the tested points when $\Delta_i^n(\alpha,\beta)\ge 0$. 
Fig.~\ref{fig:twoosc_fne_passivity}(a) reports the worst sampled FNE margin $\min_{n,\alpha,\beta}\Delta_i^n(\alpha,\beta)$ for each subsystem and each inner-iteration budget. The zero line marks the FNE threshold. All reported margins remain nonnegative up to the numerical tolerance used in the random pair tests.

For discrete passivity, we monitor the residual $r_i^n := (H_i(x_i^{n+1})-H_i(x_i^n))-\tfrac12(\|a_i^n\|^2-\|b_i^{n+1}\|^2)$; Condition~\ref{ass:discrete_passivity} holds at step $n$ whenever $r_i^n\le 0$. Fig.~\ref{fig:twoosc_fne_passivity}(b) reports the worst positive passivity residual $\max_{n,i} (r_i^n)_+$ and the worst positive augmented-storage residual across all macro-steps, for each inner-iteration budget $K_n$. To evaluate the augmented-storage residual offline, $u^{n,\dagger}$ is approximated by running DRS with a large diagnostic budget ($K_{\mathrm{diag}}=200$) at each macro-step.

Finally, Fig.~\ref{fig:twoosc_convergence}(a) shows the time history of the instantaneous state error for each $K_n$. Fig.~\ref{fig:twoosc_convergence}(b) shows that the maximum, time-RMS, and final-time state errors all decay monotonically over the tested budgets. This monotonic trend is consistent with the convergence limit stated in Theorem~\ref{thm:kn_to_infty_monolithic}.

\begin{figure}[!t]
  \centering
  \subfloat[]{\includegraphics[width=0.48\linewidth]{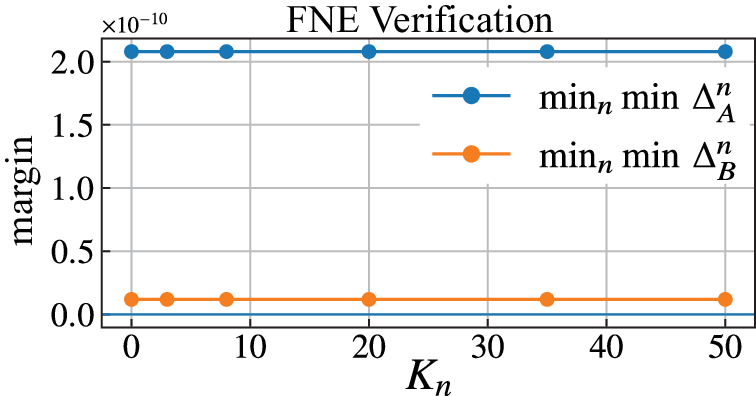}\label{fig:twoosc_fne}}%
  \hfill
  \subfloat[]{\includegraphics[width=0.48\linewidth]{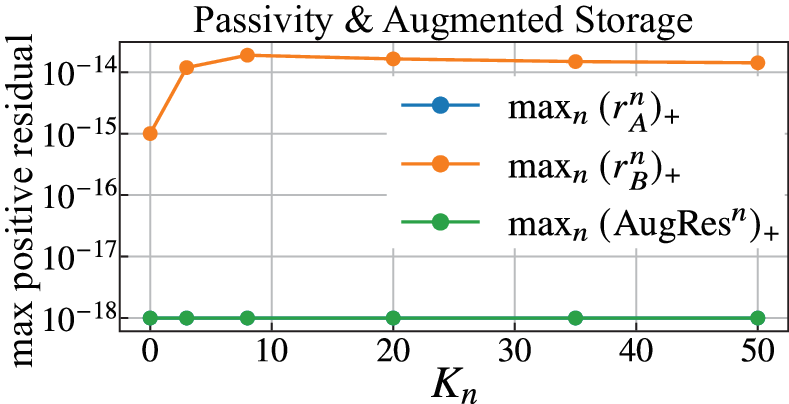}\label{fig:twoosc_passivity}}%
  \caption{Verification of the standing conditions and Theorem~1 versus $K_n$. (a) FNE verification (Condition~1): worst-case numerically evaluated margins. (b) Passivity and augmented storage (Condition~2 and Theorem~1): maximum positive parts over all macro-steps of the discrete-passivity residual \eqref{eq:discrete_passivity_scattering} and the augmented-storage residual \eqref{eq:augmented_storage_inequality}.}
  \label{fig:twoosc_fne_passivity}
\end{figure}

\begin{figure}[!t]
  \centering
  \subfloat[]{\includegraphics[width=0.48\linewidth]{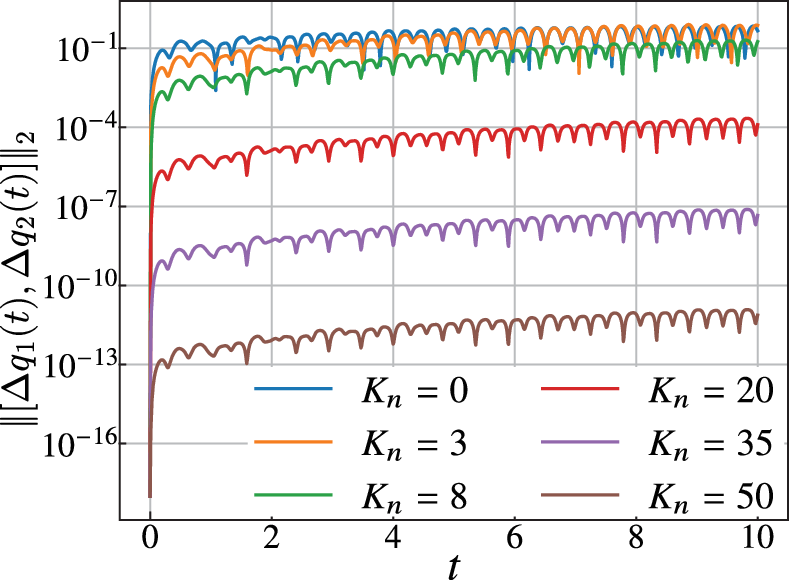}\label{fig:twoosc_poserr}}%
  \hfill
  \subfloat[]{\includegraphics[width=0.48\linewidth]{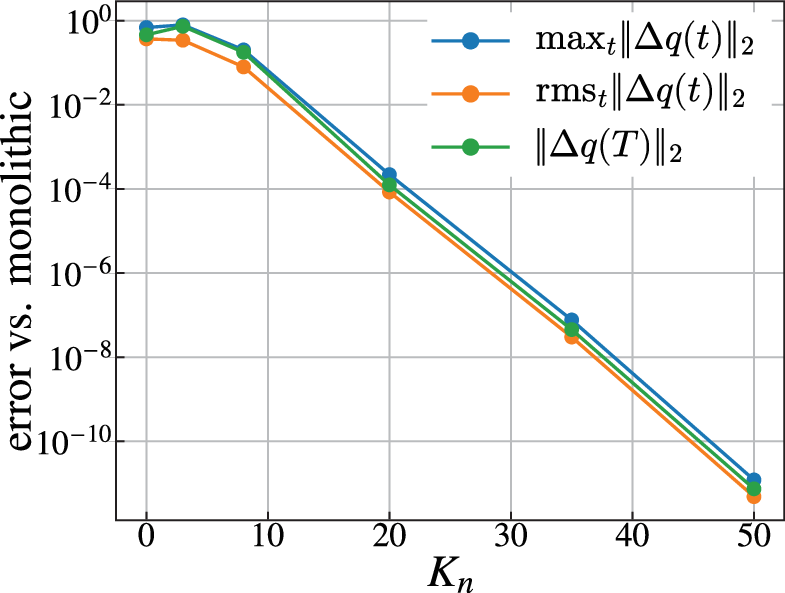}\label{fig:twoosc_knconv}}%
  \caption{Hard-coupling limit validation (Theorem~\ref{thm:kn_to_infty_monolithic}). (a) Time history of the instantaneous state error $\|[\Delta q_1(t),\Delta q_2(t)]\|_2$ with respect to the monolithic reference. (b) Maximum, time-RMS, and final-time error versus $K_n$ (all three decay monotonically over the tested budgets).}
  \label{fig:twoosc_convergence}
\end{figure}

In summary, for the selected $\gamma$, the sampled FNE margins along the realized trajectories are numerically nonnegative (Fig.~\ref{fig:twoosc_fne_passivity}(a)), indicating that Condition~\ref{ass:fne_portmap} is consistent with the benchmark. 
The discrete-passivity residuals and the augmented-storage residual are essentially nonpositive in the positive-part summaries (Fig.~\ref{fig:twoosc_fne_passivity}(b)), with any remaining positive part at numerical roundoff (on the order of $10^{-14}$ in double precision), supporting the finite-iteration guarantee of Theorem~\ref{thm:augmented_storage}. 
Finally, the maximum, time-RMS, and final-time state-error metrics decrease over the tested budgets and the trajectories approach the monolithic reference (Fig.~\ref{fig:twoosc_convergence}), consistent with Theorem~\ref{thm:kn_to_infty_monolithic}.

% !TEX root = main.tex
\section{Discussion}
\label{sec:discussion}

This letter presented an early-terminable energy-safe iterative coupling interface for partitioned port-Hamiltonian subsystems by embedding Douglas--Rachford splitting in the wave domain. The two standing conditions, firm nonexpansiveness of the frozen port maps and discrete passivity of the subsystem integrators, yield an augmented-storage passivity guarantee for any finite inner-iteration budget, and convergence to the monolithic discretization as the inner budget increases.

Several questions remain open. The present mathematical guarantees rely on the coupling being a linear orthogonal map $a=Pb$ and on the ability to freeze the subsystem state at the previous macro-step. Extending the framework to nonlinear lossless interconnections is an open direction. A quantitative state-error bound as a function of the finite inner-iteration budget $K_n$ is not yet available. When the monolithic interface condition admits multiple solutions, the DRS limit depends on initialization, and a principled mechanism for selecting among solutions remains an open question. 

Extensions of interest include adaptive selection of $\gamma$ and $K_n$ based on online FNE margin estimates, asynchronous inner iterations under communication delays, and integration of the interface layer into in-the-loop model-based control workflows.

\bibliographystyle{IEEEtran}
\bibliography{references} % Include the new bib file

\vspace{-1.5\baselineskip}

\end{document}